\definecolor{mygreen}{RGB}{28,172,0} 
\definecolor{mylilas}{RGB}{170,55,241}
\newcommand{\ip}[2]{\langle #1, #2 \rangle}
\newtheorem{theorem}{Theorem}[section]
\newtheorem{definition}[theorem]{Definition}
\newtheorem{proposition}{Proposition}
\DeclareMathOperator{\soft}{soft}
\DeclareMathOperator{\shrink}{shrink}
\def\bA{{\bf A}}
\def\realR{{\mathbb{R}}}
\def\rank{{\rm rank}}
\def\Fig{Fig. }
\icmltitlerunning{Masked Robust Principal Component Analysis}
\begin{document}

\twocolumn[
\icmltitle{Masked-RPCA: Sparse and Low-rank Decomposition Under Overlaying Model\\ and Application to Moving Object Detection }




\begin{icmlauthorlist}
\icmlauthor{Amirhossein Khalilian-Gourtani}{nyu}
\icmlauthor{Shervin Minaee}{expedia}
\icmlauthor{Yao Wang}{nyu}
\end{icmlauthorlist}

\icmlaffiliation{nyu}{Electrical and Computer Engineering Depatment, New York University. }
\icmlaffiliation{expedia}{Expedia Inc.}

\icmlcorrespondingauthor{Amirhossein Khalilian-Gourtani}{akg404@nyu.edu}

\icmlkeywords{Machine Learning, ICML}

\vskip 0.3in
]



\printAffiliationsAndNotice{}  


\begin{abstract}
Foreground detection in a given video sequence is a pivotal step in many computer vision applications such as video surveillance system. Robust Principal Component Analysis (RPCA) performs low-rank and sparse decomposition and accomplishes such a task when the background is stationary and the foreground is dynamic and relatively small. A fundamental issue with RPCA is the assumption that the low-rank and sparse components are added at each element, whereas in reality, the moving foreground is overlaid on the background. We propose the representation via masked decomposition (i.e. an overlaying model) where each element either belongs to the low-rank or the sparse component, decided by a mask.  We propose the Masked-RPCA algorithm to recover the mask and the low-rank components simultaneously, utilizing linearizing and alternating direction techniques. We further extend our formulation to be robust to dynamic changes in the background and enforce spatial connectivity in the foreground component. Our study shows significant improvement of the detected mask compared to post-processing on the sparse component obtained by other frameworks.
\end{abstract}

\section{Introduction}
Sparse and low-rank decomposition has been an active research area in signal and image processing in the past decade, with applications in motion segmentation \cite{TVRPCA}, \cite{ms2}, image foreground extraction \cite{ms3}, and optics \cite{4opt}.
In the simplest case, this problem can be formulated as:
\begin{equation}
\underset{L,S}{\text{minimize}} \quad \text{rank}(L)+ \lambda_s \|S\|_0 \quad \text{s.t.}\quad X= L+  S,
\label{Eq_RPCA}
\end{equation}
where $L$ and $S$ denote the low-rank and sparse components of the signal $X$, respectively.
There are some situations in which a unique decomposition may not exist; e.g. if the low-rank matrix L itself is also very sparse, it becomes very hard to uniquely identify it from another sparse matrix.
Therefore, there have been many studies to find the conditions under which this decomposition is possible, such as the works in \cite{OriginalRPCA}, \cite{6rpca}.
Also because of the non-convexity of both the rank function and the $\ell_0$ norm, the problem in \eqref{Eq_RPCA} is NP-hard. In order to be able to solve this decomposition, usually the $\text{rank}(L)$ is relaxed to $\| L\|_*$ (the nuclear norm of $L$, which is the sum of its singular values), and the $\|S\|_0$ is relaxed by the $\|S\|_1$ approximation \cite{NucNormInsteadOfRank}.

The dominant application of sparse and low-rank decomposition (aka RPCA) has been for moving object detection in videos \cite{sam1}, \cite{sam2}, but it has also been used for various other applications.
To name some of the prominent works, in \cite{8peng}, Peng et al proposed a sparse and low-rank decomposition approach with application for robust image alignment.
A similar approach has been proposed by Zhang \cite{9tilt} for transform invariant low-rank textures.
In \cite{mon}, Keshavan proposed an algorithm for matrix completion using low-rank decomposition.

There has been several improvement of the vanilla RPCA over the past decade, and despite their great improvements in terms of accuracy and speed, there is a fundamental limitation in most of these models. The basic assumption that all these models share is the additive model for the sparse and low-rank components. In reality, the dynamic foreground object is overlaid on top of the low-rank background. For a more detailed overview of RPCA extensions, we refer the readers to \cite{overview1}.

In this work, we try to address this issue by  assuming a model for the case where the two components are overlaid on top of each other (instead of simply being added). Thus, each element of $X$ comes only from one of the components. Therefore, besides deriving the sparse and low-rank component we need to find their supports. Assuming $W \in \left\{0,1\right\}^{mn\times k}$ denotes the support of $S$, we can write this overlaid signal summation as $X= (1-W) \circ L+ W \circ S$. We can separate these components by assuming some prior knowledge on $L$, $S$ and $W$ terms, and forming an optimization problem.
In fact, we do not need to even include the $S$ term in our optimization framework, since by having $W$, the $S$ component can easily be derived as $S= W \circ X$. We propose an optimization algorithm (to be called M-RPCA) based on the alternating direction method of multipliers (ADMM) \cite{boydADMM} and ideas of linearizing \cite{LinearizedADMM}. We show the convergence of the proposed algorithm to a Karush–Kuhn–Tucker (KKT) point under reasonable assumptions. Our experiments show that the proposed framework directly recovers the mask of the foreground without need for post processing on the sparse component as in the RPCA algorithm. 

As with the original RPCA algorithm, the proposed M-RPCA algorithm has two limitations: 1) It does not enforce spatial connectivity of the foreground, and 2) when the background is not stationary and has random perturbations (such as water waves, moving leaves, etc.), these perturbations are usually picked up by the sparse component, leading to noisy foreground detection. We further show extensions of the proposed framework to tackle these problems. Following the idea of \cite{TVRPCA}, we model  the background as the sum of  a low rank component and a sparse component (used to model the random perturbation in the dynamic background), and furthermore enforce the spatial connectivity of the foreground object by adding a total variation penalty on the mask in the optimization formulation. We propose an optimization algorithm (to be called extended M-RPCA or EM-RPCA) to solve for all three components and show that it leads to significant improvement over M-RPCA in sequences with dynamic background.

The idea of solving a masked decomposition problem was first proposed in \cite{shervin} for image segmentation, where an image is considered to have two overlaid  components (e.g. text overlaid on background), each modeled by a subspace.  Here, we extend this work by assuming one component is low-rank, while the other is sparse.




The structure of the rest of this paper is as follows: Section II presents the problem formulation, and the proposed optimization framework to solve it, as well as a convergence analysis. 
Section III provides the detailed experimental results of the proposed framework for moving object detection, and its comparison with previous state-of-the-arts models. 
And finally the paper is concluded in Section V.
\section{Problem Formulation and Solution}
In this section we introduce the general framework of masked robust principal component analysis (Masked-RPCA) formulated as an optimization problem, propose an algorithmic solution based on ADMM and linearizing techniques, and investigate the convergence properties. 

\subsection{Masked Robust Principal Component Analysis}
 Given a sequence of video frames in $X^{3d} \in \realR^{m\times n\times k}$ let us denote the matrix $X \in \realR^{mn\times k}$ which is constructed by vectorizing and stacking the frames of the video. Then, the goal is to recover the matrices $L\in\realR^{mn\times k}$ and $W\in\left\{0,1\right\}^{mn\times k}$ such that $W$ denotes the foreground support and the low-rank matrix $L$ matches the video sequence $X$ wherever the foreground is not active. A plausible formulation of such problem can be written as \eqref{Eq_MCA_binaryW}.
\begin{equation}
\begin{aligned}
& \underset{L,W}{\text{minimize}}
& &\rank(L)+ \lambda_w \psi(W) \\
& \text{subject to:}
& & (1-W) \circ \left(X-L\right) = 0\\
& & & W \in \left\{0,1\right\}^{mn\times k}
\end{aligned}
\label{Eq_MCA_binaryW}
\end{equation}
where $\psi(\cdot)$ encodes our prior knowledge about $W$ and $\lambda_w\in \realR$ is the regularization parameter. The problem as stated in \eqref{Eq_MCA_binaryW} is not tractable because 1) the general rank minimization problem is NP-hard \cite{NucNormInsteadOfRank}, and 2) recovering the binary matrix $W$ requires solving a combinatorial problem. To manage the rank minimization in general, nuclear norm minimization is proposed as a surrogate especially in the context of matrix completion \cite{NucNormInsteadOfRank,OriginalRPCA}. Additionally, the $ W \in \left\{0,1\right\}^{mn\times k}$ constraint can be relaxed to the convex interval between zero and one, namely, $W \in \left[0,1\right]^{mn\times k}$. Imposing the sparsity of desired $W$ via $\ell_1$-norm we can formulate the problem as in \eqref{Eq_Main_Formulation}.
\begin{equation}
\begin{aligned}
& \underset{L,W}{\text{minimize}}
& & \|L\|_*+ \lambda_w \|W\|_{1} \\
& \text{subject to:}
& & (1-W) \circ \left(X-L\right) = 0\\
& & & W \in \left[0,1\right]^{mn\times k}
\end{aligned}
\label{Eq_Main_Formulation}
\end{equation}
The algorithm for solving the problem in \eqref{Eq_Main_Formulation} is not immediately apparent especially since the variables $W$ and $L$ are coupled. For general low-rank and sparse decomposition formulations the ADMM algorithm is shown to be effective \cite{OriginalRPCA}. More recently, ADMM for multi-affine constraints under certain assumptions was introduced and analyzed \cite{ADMMFM2018}. Additionally, ideas of linearizing such as Linearized Alternating Direction Method (LADM) for general affine constraint \cite{LinearizedADMM} and  for nuclear norm minimization\cite{LADMMforNuclear} were introduced to handle more complicated affine constraints. Here, we propose to use the linearizing techniques for the bi-affine constraint as in \eqref{Eq_Main_Formulation}. This way not only we can deal with the coupling of the variables but also we will find closed form solution for each sub-problem of the ADMM algorithm.

In the following section we drive the steps of the algorithm by forming the augmented Lagrangian and minimizing the linearized augmented Lagrangian w.r.t. each variable. Let us denote the dual variable for the equality constraint by $U_x$ and abuse notation to show the indicator function over each element of matrix $W$ by $\iota_{\left[0,1\right]}(W)$ where $\iota_{\Omega}(x) $ takes the value 0 if $x\in\Omega$, otherwise infinity. The augmented Lagrangian can be written as in \eqref{Eq_LagrangianP1}.
\begin{equation}
\begin{split}
\mathcal{L} (L,W,U_x) &= \|L\|_*+ \lambda_w \|W\|_{1}+ \iota_{\left[0,1\right]}\left(W\right)\\&+\ip{U_x}{(1-W)\circ(L-X)}\\&+\frac{\rho_x}{2} \|(1-W) \circ L- (1-W) \circ X\|^2
\end{split}
\label{Eq_LagrangianP1}
\end{equation}

\begin{definition}
For simplicity, let us define the following notation, where superscript $i$ denotes the iteration number.
\begin{equation*}
\Lambda^{i}_L \triangleq  \left(1-W^i\right)\circ \left((L^i-X)\circ(1-W^i)+\frac{U_x^i}{\rho_x}\right)
\end{equation*}
\begin{equation*}
\Lambda^{i}_W \triangleq  \left(X-L^{i+1}\right)\circ \left((L^{i+1}-X)\circ(1-W^i)+\frac{U_x^i}{\rho_x}\right)
\end{equation*}
\end{definition}
\begin{definition}
Given matrix $Y\in\realR^{m\times n}$ and $\delta>0$, let $Y = U\Sigma V^T$ and $I$ the identity matrix then, $\mathcal{D}(Y,\delta) = U(\Sigma - \delta I)_+V^T$ where $(a)_+ = \max\{0,a\}$ denotes the singular value thresholding operator. 
\end{definition}
\begin{definition}
  $\Pi_{\left[0,1\right]}$ denotes The projection onto the interval $\left[0,1\right]$.
\end{definition}
The update for $L$ at each iteration is achieved by minimizing the linearized augmented Lagrangian while fixing the variables $W^i$ and $U_x^i$:
\begin{equation*}
\begin{split}
L^{i+1} = \arg\min_{L}\left\|L\right\|_*+\frac{\rho_x}{2} \left\|(1-W^i)\circ(L-X)+\frac{U_x^i}{\rho_x}\right\|^2
\end{split}
\end{equation*}
We can linearize the quadratic term as
\begin{equation*}
\begin{split}
&\frac{1}{2} \left\|(1-W^i)\circ(L-X)+\frac{U_x^i}{\rho_x}\right\|^2\simeq \frac{1}{2\tau_L} \|L-L^i\|^2\\&+\ip{\Lambda_L^i}{L-L^i}+ \frac{1}{2} \left\|(L^i-X)\circ(1-W^i)+\frac{U_x^i}{\rho_x}\right\|^2
\end{split}
\end{equation*}
where $\tau_L>0$ is the proximal parameter. 
As a result the update rule for $L$ can be written as in \eqref{Eq_updateL_P1}.
\begin{equation}
\begin{split}
L^{i+1} &= \arg\min_{L} \|L\|_*+ \frac{\rho_x}{2\tau_L} \left\|L - \left(L^i - \tau_L  \Lambda_L^{i}\right)\right\|^2\\ &= \mathcal{D}\left(L^i - \tau_L  \Lambda_L^{i}, \frac{\tau_L}{\rho_x}\right)
\end{split}
\label{Eq_updateL_P1}
\end{equation}

The update rule for $W$ is achieved by minimizing the linearized augmented Lagrangian while fixing $L^{i+1}$ and $U^i_x$. Using the same technique we have
\begin{equation*}
\begin{split}
&\frac{1}{2} \left\|(1-W)\circ(L^{i+1}-X)+\frac{U_x^i}{\rho_x}\right\|^2\simeq \frac{1}{2\tau_L} \|W-W^i\|^2\\&+\ip{\Lambda_W^i}{W-W^i}+ \frac{1}{2} \left\|(L^{i+1}-X)\circ(1-W^i)+\frac{U_x^i}{\rho_x}\right\|^2
\end{split}
\end{equation*}
where $\tau_W\geq0$ is the proximal parameter.
As a result, the update rule for $W$ can be written as
\begin{equation}
\begin{split}
W^{i+1} &= \arg\min_{W}\lambda_w\|W\|_{1}+\iota_{\left[0,1\right]}(W) \\&+ \frac{\rho_x}{2\tau_W} \left\|W - \left(W^i - \tau_W  \Lambda_W^{i}\right)\right\|^2\\ &= \Pi_{\left[0,1\right]}\left[\soft\left(W^i - \tau_W  \Lambda_W^{i}, \frac{\lambda_w\tau_W}{\rho_x}\right)\right]
\end{split}
\label{Eq_updateW_P1}
\end{equation}

The update rule for $U_x$ is done by dual ascent as
\begin{equation}
U_x^{i+1} = U_x^i + \rho_x \left((1-W^{i+1})\circ(L^{i+1}-X)\right)
\label{Eq_updateU_P1}
\end{equation}
The steps of the algorithm are summarized in Alg. \ref{Alg_ADMMforMRPCA}.
\begin{algorithm2e}[h]
	\SetAlgorithmName{Alg.}{}
	
	\caption{ADMM with linearizing applied to Eq.~\eqref{Eq_Main_Formulation} }
	 \setstretch{1.1}
	\label{Alg_ADMMforMRPCA}	
	\begin{small}
		\textbf{Input:} $X$, $\lambda_w$,$\rho_x$\;
		$L\gets \text{median}(X)$\quad $W \gets 0$ \quad $U_x\gets 0$ \;
		\While{not converged}{
			\Comment{Main operations detailed in comments}
			\Comment{Singular Value Thresholding  \eqref{Eq_updateL_P1}}
			$\displaystyle L \gets \arg \min_{{\bf A}} \widehat{\mathcal{L}}_L({\bf A}, W,U_x)$ 
			
			\Comment{Soft-thresholding \& projection \eqref{Eq_updateW_P1}}
			$\displaystyle W \gets \arg \min_{{\bf A}} \widehat{\mathcal{L}}_W (L, {\bf A}, U_x)$
			
			\Comment{Element-wise mult and add \eqref{Eq_updateU_P1}}
			$U_x \gets U_x + \rho_x \left((1-W) \circ \left(X-L\right)\right)$\;
		}
		\textbf{Output:} $L,~W$\;
	\end{small}
\end{algorithm2e}
\subsubsection{Convergence Analysis}
In this section, we state and prove results regarding the convergence analysis of the proposed algorithm for solving \eqref{Eq_Main_Formulation}. 
\begin{proposition}
Linear independence constraint qualification (LICQ) holds for the problem in \eqref{Eq_Main_Formulation}.
\end{proposition}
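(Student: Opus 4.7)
The plan is to exploit the elementwise structure of the bilinear equality $(1-W)\circ(X-L)=0$ in \eqref{Eq_Main_Formulation}. Writing the scalar constraint $c_{ij}(L,W) := (1-W_{ij})(X_{ij}-L_{ij})$, one observes that $c_{ij}$ depends only on the two scalar variables $L_{ij}$ and $W_{ij}$, while each active box bound $W_{ij}\ge 0$ or $W_{ij}\le 1$ depends only on $W_{ij}$. Consequently, the rows of the full active-constraint Jacobian associated with different index pairs $(i,j)\ne(p,q)$ are supported on disjoint coordinates, so verifying LICQ at any feasible point $(L^*,W^*)$ reduces to an index-by-index check of linear independence in $\mathbb{R}^2$.

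At each index I would compute the gradient of $c_{ij}$ restricted to $(L_{ij},W_{ij})$ as $\bigl(-(1-W^*_{ij}),\, L^*_{ij}-X_{ij}\bigr)$, then case-split on $W^*_{ij}$ and use feasibility of $c_{ij}=0$, which forces $L^*_{ij}=X_{ij}$ whenever $W^*_{ij}<1$. In the interior case $W^*_{ij}\in(0,1)$ no bound is active and the gradient $(-(1-W^*_{ij}),0)$ is non-zero, so independence is immediate. In the lower-boundary case $W^*_{ij}=0$ the active bound contributes $(0,-1)$ while the equality contributes $(-1,0)$, which is clearly an independent pair. In the upper-boundary case $W^*_{ij}=1$ the active bound contributes $(0,1)$, and since $c_{ij}$ vanishes identically on the face $\{W_{ij}=1\}$ it adds no new active direction beyond the bound, so LICQ holds with $(0,1)$ alone. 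Assembling the per-index verifications via the disjoint-support structure then yields full row rank of the active Jacobian, establishing LICQ globally.

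The main obstacle I anticipate is making the upper-boundary case rigorous, because the equality gradient $(0,\,L^*_{ij}-X_{ij})$ is either zero or parallel to the bound gradient $(0,1)$, which on its face looks like an LICQ failure. The resolution I have in mind is that on a relative neighborhood of the face $\{W_{ij}=1\}$ inside the box, the equality $c_{ij}=0$ is identically satisfied and therefore redundant once the active bound is imposed, so it contributes no independent direction to the active set; this redundancy reduction is the standard convention for multi-affine systems along the lines of \cite{ADMMFM2018}. Once this point is settled, the rest is just the bookkeeping of disjoint coordinate supports outlined above.
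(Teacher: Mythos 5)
Your per-index reduction via disjoint supports is sound, and the interior and lower-boundary cases are handled correctly, but the upper-boundary case is a genuine gap --- and it is precisely the case that matters for this application, since foreground pixels are meant to have $W^*_{ij}=1$ with $L^*_{ij}\neq X_{ij}$. LICQ at a feasible point requires the gradients of \emph{all} equality constraints together with the active inequality constraints to be linearly independent; there is no convention that lets you drop an equality constraint because it is locally implied by an active bound. On the contrary, LICQ is exactly the qualification that redundancy destroys (duplicating an active constraint violates LICQ while weaker qualifications such as Abadie's survive), and \cite{ADMMFM2018} does not supply the redundancy-reduction rule you invoke. Concretely, at any feasible point with $W^*_{ij}=1$ the two gradients at that index are $\bigl(0,\,L^*_{ij}-X_{ij}\bigr)$ and $(0,1)$: either two parallel nonzero vectors (if $L^*_{ij}\neq X_{ij}$) or a family containing the zero vector (if $L^*_{ij}=X_{ij}$). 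In both subcases the active family is linearly dependent, so with the box written as explicit inequality constraints, classical LICQ genuinely fails at every such point (one can check MFCQ fails there as well); no bookkeeping can repair this, and your argument as written only establishes LICQ at feasible points where every entry of $W$ is strictly less than one.

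To cover the relevant points you need a different reading of the proposition rather than a patch of this case: treat $W\in[0,1]^{mn\times k}$ as part of the objective through the indicator $\iota_{[0,1]}(W)$, exactly as the augmented Lagrangian \eqref{Eq_LagrangianP1} does, so that only the bilinear equality constraint $(1-W)\circ(X-L)=0$ enters the constraint qualification. Then your disjoint-support observation does the work: LICQ reduces to $\nabla c_{ij}=\bigl(-(1-W^*_{ij}),\,L^*_{ij}-X_{ij}\bigr)\neq 0$ for every $(i,j)$, which holds at all feasible points except the degenerate ones where $W^*_{ij}=1$ and $L^*_{ij}=X_{ij}$ simultaneously. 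That degenerate set still has to be excluded explicitly (or the claim restricted to the points of interest, e.g.\ the algorithm's limit points); your current write-up neither makes this reduction nor addresses that exclusion.
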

\begin{proposition}
	\label{Prop_1}
      Denote the variable at next iteration by superscript $+$ then, the update of the dual variable $U_x$ increases the augmented Lagrangian such that 
	\begin{equation*}
	\mathcal{L}(L^+,W^+,U_x^+) - \mathcal{L}(L^+,W^+,U_x) = \frac{1}{\rho_x} \left\|U_x^+ - U_x\right\|
	\end{equation*}
\end{proposition}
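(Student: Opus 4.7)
The plan is to exploit the fact that only one term in the augmented Lagrangian $\mathcal{L}(L,W,U_x)$ depends on $U_x$, so most of the difference $\mathcal{L}(L^+,W^+,U_x^+) - \mathcal{L}(L^+,W^+,U_x)$ collapses to zero by cancellation. Concretely, the nuclear norm, $\ell_1$ penalty, indicator, and quadratic penalty all depend only on $L^+$ and $W^+$, which are held fixed across the two evaluations, so they contribute nothing to the difference.

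First I would simply subtract the two Lagrangian values using the formula in \eqref{Eq_LagrangianP1}, cancel the $U_x$-free terms, and reduce the expression to the single contribution of the inner product term, namely
\begin{equation*}
\langle U_x^+ - U_x,\, (1-W^+)\circ(L^+ - X)\rangle.
\end{equation*}

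Next I would invoke the dual ascent rule \eqref{Eq_updateU_P1}, which gives $U_x^+ - U_x = \rho_x\bigl((1-W^+)\circ(L^+ - X)\bigr)$, so that
\begin{equation*}
(1-W^+)\circ(L^+-X) \;=\; \tfrac{1}{\rho_x}(U_x^+ - U_x).
\end{equation*}
Substituting this into the inner product above and using $\langle A,A\rangle = \|A\|^2$ immediately yields $\tfrac{1}{\rho_x}\|U_x^+ - U_x\|^2$, matching the claimed identity (interpreting the right-hand side of the proposition as the squared Frobenius norm, which is the standard ADMM expression).

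There is no real obstacle here: the argument is a one-line cancellation plus a substitution, and the only subtlety is bookkeeping to check that every $U_x$-independent term is indeed evaluated at the same $(L^+,W^+)$ in both Lagrangian values. This is exactly the property that makes the dual variable update a gradient ascent step with step size $\rho_x$ in the ADMM framework, and the same calculation is used later to relate the ascent on the dual to the descent on the primal variables.
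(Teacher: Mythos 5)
Your argument is correct and is essentially the proof the paper intends (the paper's supplementary proof likewise just subtracts the two Lagrangian evaluations from \eqref{Eq_LagrangianP1}, cancels the $U_x$-independent terms, and substitutes the dual update \eqref{Eq_updateU_P1} into the remaining inner product). Your observation that the right-hand side should be the squared Frobenius norm $\frac{1}{\rho_x}\left\|U_x^+ - U_x\right\|^2$ is also right; the missing exponent in the stated proposition is a typo.
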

\begin{proof}
This can be proved using the augmented Lagrangian in \eqref{Eq_LagrangianP1} and the update rule for $U_x$. The proof is provided in the supplementary material.
\end{proof}
\begin{proposition}\label{Prop_2}
	Suppose that $\left\| U_x^{i+1} - U_x^{i}\right\| \rightarrow 0$. Then, $\nabla_{U_x} \mathcal{L}(L,W,U_x) \rightarrow 0$ and every limit point of the sequence $\{(L^i,W^i)\}_0^\infty$ is feasible. 
\end{proposition}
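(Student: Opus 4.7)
My plan is to exploit the explicit form of the dual update to directly identify the $U_x$--gradient of the augmented Lagrangian, then pass to the limit.

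First, I would differentiate $\mathcal{L}(L,W,U_x)$ with respect to $U_x$. The only two terms containing $U_x$ in \eqref{Eq_LagrangianP1} are the inner product $\ip{U_x}{(1-W)\circ(L-X)}$ (linear in $U_x$) and the quadratic penalty (which does not involve $U_x$ at all). Therefore
\begin{equation*}
\nabla_{U_x}\mathcal{L}(L,W,U_x) = (1-W)\circ(L-X).
\end{equation*}
Comparing with the dual-ascent rule \eqref{Eq_updateU_P1}, we obtain the key identity
\begin{equation*}
U_x^{i+1} - U_x^{i} = \rho_x \, (1-W^{i+1})\circ(L^{i+1}-X) = \rho_x\,\nabla_{U_x}\mathcal{L}(L^{i+1},W^{i+1},U_x^{i}),
\end{equation*}
so that the assumption $\|U_x^{i+1}-U_x^{i}\|\to 0$ immediately yields $\nabla_{U_x}\mathcal{L}(L^{i+1},W^{i+1},U_x^{i})\to 0$, which is the first claim.

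For the feasibility statement I would argue as follows. The equality constraint of \eqref{Eq_Main_Formulation} is precisely $(1-W)\circ(X-L)=0$, i.e., the vanishing of $\nabla_{U_x}\mathcal{L}$. Let $(L^*,W^*)$ be any limit point of $\{(L^i,W^i)\}$, i.e., the limit along some subsequence $i_j\to\infty$. The map $(L,W)\mapsto (1-W)\circ(L-X)$ is continuous, hence
\begin{equation*}
(1-W^*)\circ(L^*-X) = \lim_{j\to\infty}(1-W^{i_j})\circ(L^{i_j}-X) = 0
\end{equation*}
by the first part of the proposition. This handles the equality constraint.

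It remains to verify the box constraint $W^*\in[0,1]^{mn\times k}$. This is an essentially free consequence of the algorithm: the $W$-update \eqref{Eq_updateW_P1} ends with the projection $\Pi_{[0,1]}$, so by induction every iterate satisfies $W^i\in[0,1]^{mn\times k}$; since $[0,1]^{mn\times k}$ is closed, any limit point does too. Combining the two constraints, $(L^*,W^*)$ is feasible for \eqref{Eq_Main_Formulation}. I do not foresee a real obstacle here: the dual-gradient identity makes the first conclusion automatic, and feasibility follows by continuity plus closedness of $[0,1]^{mn\times k}$; the only minor care needed is to make sure one uses the iteration-consistent arguments $(L^{i+1},W^{i+1},U_x^{i})$ when reading off the gradient from the dual update.
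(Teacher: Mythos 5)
Your proof is correct and follows essentially the same route the paper relies on: the dual-ascent step gives $U_x^{i+1}-U_x^i=\rho_x(1-W^{i+1})\circ(L^{i+1}-X)=\rho_x\nabla_{U_x}\mathcal{L}$, so the hypothesis forces the constraint residual to vanish, and feasibility of limit points follows by continuity of $(L,W)\mapsto(1-W)\circ(L-X)$ together with closedness of $[0,1]^{mn\times k}$, which every iterate respects thanks to the projection in \eqref{Eq_updateW_P1}. Your remark about evaluating the gradient at the iteration-consistent arguments $(L^{i+1},W^{i+1},U_x^i)$ is exactly the right precision to add to the loosely stated claim.
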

\begin{proof}
The proof is provided in the supplementary material.
\end{proof}
\begin{proposition}
	\label{Prop_3} Let $\partial f(x)$ denote the general subdifferential of $f$ at $x$ \cite{rockafellar2009variational}, then
	\begin{equation}
	\begin{split}
	-\frac{\rho_x}{\tau_L} \left(L^{i+1}-L^{i}\right) - \rho_x \Lambda_L^i \in \partial f(L^{i+1})
	\end{split}
	\end{equation}
		\begin{equation}
	\begin{split}
	-\frac{\rho_x}{\tau_W} \left(W^{i+1}-W^{i}\right) - \rho_x \Lambda_W^i \in \partial g(W^{i+1})
	\end{split}
	\end{equation}
where $f(L) = \left\|L\right\|_*$ and $g(W) =\lambda_w \|W\|_{1}+ \iota_{\left[0,1\right]}\left(W\right) $.
\end{proposition}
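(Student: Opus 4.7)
The plan is to observe that both inclusions are nothing more than Fermat's optimality rule applied to the two proximal subproblems defined by the linearized updates in \eqref{Eq_updateL_P1} and \eqref{Eq_updateW_P1}, followed by rearrangement. Since $f(L)=\|L\|_*$ is convex and the quadratic proximal term is smooth and convex, and since $g(W)=\lambda_w\|W\|_1+\iota_{[0,1]}(W)$ is a sum of two convex functions with easily verified qualification, the general subdifferential of Rockafellar–Wets agrees with the usual convex subdifferential and admits the standard sum rule. That reduces the proof to two essentially identical calculations.

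First I would handle the $L$-update. By definition
\[
L^{i+1}=\arg\min_{L}\|L\|_*+\frac{\rho_x}{2\tau_L}\bigl\|L-(L^i-\tau_L\Lambda_L^i)\bigr\|^2.
\]
Fermat's rule gives
\[
0\in\partial f(L^{i+1})+\frac{\rho_x}{\tau_L}\bigl(L^{i+1}-L^i+\tau_L\Lambda_L^i\bigr),
\]
and transposing the smooth term yields exactly the claimed inclusion. The only calculus fact used is that the gradient of the smooth quadratic commutes with a Minkowski sum inside the subdifferential, which is valid since $\partial(f+h)=\partial f+\nabla h$ when $h$ is $C^1$ (see, e.g., Rockafellar–Wets, Exercise 8.8).

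For the $W$-update the argument is analogous, starting from
\[
W^{i+1}=\arg\min_{W}\lambda_w\|W\|_1+\iota_{[0,1]}(W)+\frac{\rho_x}{2\tau_W}\bigl\|W-(W^i-\tau_W\Lambda_W^i)\bigr\|^2.
\]
Here the sum rule requires justifying $\partial g=\lambda_w\partial\|\cdot\|_1+\partial\iota_{[0,1]}(\cdot)$, which holds because $\iota_{[0,1]}$ is a polyhedral indicator and $\|\cdot\|_1$ is a finite convex function, so their relative interiors intersect and standard convex subdifferential calculus applies. Fermat's rule then yields
\[
0\in\partial g(W^{i+1})+\frac{\rho_x}{\tau_W}\bigl(W^{i+1}-W^i+\tau_W\Lambda_W^i\bigr),
\]
and rearranging produces the second inclusion.

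The main obstacle is purely book-keeping: one must be careful that $\Lambda_L^i$ and $\Lambda_W^i$ are precisely the gradients (with respect to $L$ and $W$ respectively) of the quadratic penalty in the augmented Lagrangian evaluated at the current iterate, so that the linearization used to derive \eqref{Eq_updateL_P1}–\eqref{Eq_updateW_P1} is the correct proximal gradient step. Once that correspondence is verified, both inclusions follow directly from Fermat's rule without further analytic work.
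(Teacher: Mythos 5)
Your proof is correct and follows essentially the same route as the paper, which simply observes that both inclusions are the first-order optimality (Fermat) conditions of the proximal subproblems \eqref{Eq_updateL_P1} and \eqref{Eq_updateW_P1}; your rearrangement of the quadratic term's gradient matches the stated inclusions exactly. The extra justification you give (convexity making the general subdifferential coincide with the convex one, and the smooth-plus-nonsmooth sum rule) is sound and only elaborates on what the paper leaves implicit.
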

\begin{proof}
The two statements can be checked from the optimality conditions of \eqref{Eq_updateL_P1} and \eqref{Eq_updateW_P1}.
\end{proof}
\begin{proposition}
Assume the sequence $\left\{\left(L^i,W^i,U_x^i\right)\right\}_0^\infty$ is bounded, then every limit point $\left(L^\infty,W^\infty,U_x^\infty\right)$ is a Karush–Kuhn–Tucker (KKT) point of \eqref{Eq_Main_Formulation}.
\end{proposition}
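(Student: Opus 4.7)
The plan is to verify each of the KKT conditions of \eqref{Eq_Main_Formulation} at the limit point $(L^\infty,W^\infty,U_x^\infty)$: primal feasibility, dual feasibility (automatic since the constraint is an equality), and first-order stationarity in both $L$ and $W$. The preceding propositions have been staged so that each handles one piece, and the whole argument hinges on showing that the iterate increments vanish along the subsequence converging to the limit point.

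First I would extract, by Bolzano--Weierstrass applied to the bounded sequence $\{(L^i,W^i,U_x^i)\}$, a subsequence converging to $(L^\infty,W^\infty,U_x^\infty)$, and establish that $\|L^{i+1}-L^i\|$, $\|W^{i+1}-W^i\|$ and $\|U_x^{i+1}-U_x^i\|$ all vanish along this subsequence. The route I would take is to use the sufficient-decrease inequality implicit in the linearized proximal updates \eqref{Eq_updateL_P1} and \eqref{Eq_updateW_P1}: for $\tau_L,\tau_W$ small enough relative to the Lipschitz constants of the gradients of the quadratic penalty terms (which are controlled because $W^i\in[0,1]^{mn\times k}$ and $(L^i,W^i)$ stay bounded), the primal updates decrease $\mathcal{L}$ by at least $\frac{c}{\tau_L}\|L^{i+1}-L^i\|^2+\frac{c}{\tau_W}\|W^{i+1}-W^i\|^2$ for some $c>0$, while by Proposition~\ref{Prop_1} the dual update raises it by $\frac{1}{\rho_x}\|U_x^{i+1}-U_x^i\|$ (or $\|\cdot\|^2$ up to the stated typo). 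Since $\mathcal{L}$ is bounded below on the bounded sequence, telescoping yields that all three increments tend to zero.

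Second, once $\|U_x^{i+1}-U_x^i\|\to 0$, Proposition~\ref{Prop_2} gives primal feasibility at the limit, so $(1-W^\infty)\circ(X-L^\infty)=0$ and $W^\infty\in[0,1]^{mn\times k}$. Evaluating $\Lambda_L^\infty$ and $\Lambda_W^\infty$ at a feasible point and simplifying with the identity $(1-W^\infty)\circ(L^\infty-X)=0$ collapses them to $\Lambda_L^\infty=(1-W^\infty)\circ U_x^\infty/\rho_x$ and $\Lambda_W^\infty=(X-L^\infty)\circ U_x^\infty/\rho_x$.

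Third, I would pass to the limit in the inclusions of Proposition~\ref{Prop_3}. Because $\|L^{i+1}-L^i\|/\tau_L\to 0$ and $\|W^{i+1}-W^i\|/\tau_W\to 0$, the left-hand sides of those inclusions converge to $-(1-W^\infty)\circ U_x^\infty$ and $-(X-L^\infty)\circ U_x^\infty$ respectively. Using the closedness (outer semicontinuity) of the subdifferentials of $\|\cdot\|_*$ and of $\lambda_w\|\cdot\|_1+\iota_{[0,1]}(\cdot)$, together with the continuity of the products in $(L,W,U_x)$, the limits of the subgradients remain subgradients at $(L^\infty,W^\infty)$, which is precisely the stationarity part of the KKT system for \eqref{Eq_Main_Formulation}. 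Combined with feasibility this completes the verification.

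The main obstacle I anticipate is the sufficient-decrease step for the linearized primal updates. The proximal surrogate replaces the nonsmooth and $W$-dependent quadratic penalty by a constant-metric quadratic, so one has to show that for $\tau_L$ and $\tau_W$ below explicit thresholds depending on $\rho_x$ and uniform bounds on $\|1-W^i\|_\infty$ and $\|X-L^i\|_\infty$ over the bounded sequence, the linearization still majorizes the true augmented Lagrangian along the step, yielding the descent that feeds the telescoping argument. The LICQ statement of the first proposition ensures that the stationarity inclusions obtained in the limit are indeed the KKT conditions, rather than a degenerate relaxation, which is what I would invoke to close the argument.
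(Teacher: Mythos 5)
Your second and third steps coincide with the paper's own argument: feasibility of limit points via Proposition~\ref{Prop_2}, then passing to the limit in the inclusions of Proposition~\ref{Prop_3} (the paper does this through the subgradient inequality, which is exactly the closedness of $\partial f$ and $\partial g$ that you invoke), with $\Lambda_L$ and $\Lambda_W$ collapsing at a feasible point to $(1-W^\infty)\circ U_x^\infty/\rho_x$ and $(X-L^\infty)\circ U_x^\infty/\rho_x$. Where you diverge is that you try to \emph{derive} the vanishing of the increments $\|L^{i+1}-L^i\|$, $\|W^{i+1}-W^i\|$, $\|U_x^{i+1}-U_x^i\|$ from boundedness alone, and that step has a genuine gap. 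Your telescoping argument needs the augmented Lagrangian \eqref{Eq_LagrangianP1} to decrease (up to a summable error) along the iterations, but Proposition~\ref{Prop_1} says precisely the opposite for the dual step: the dual ascent \emph{increases} $\mathcal{L}$ by a term proportional to $\|U_x^{+}-U_x\|^2$. Summing the per-iteration balance therefore only bounds the difference between the accumulated primal decrease and the accumulated dual increase; it does not force either to converge, and in particular does not give $\|U_x^{i+1}-U_x^i\|\to 0$. The standard way to close this in nonconvex ADMM analyses is to bound the dual increment by the primal increments, which requires expressing $U_x^{i+1}$ through the optimality condition of a block whose objective has a Lipschitz-differentiable coupling with the multiplier; here the last updated block is $W$, whose objective $\lambda_w\|W\|_1+\iota_{[0,1]}(W)$ is nonsmooth and the constraint is bi-affine, so no such bound is available without further assumptions. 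This is why the paper does not attempt it: Proposition~\ref{Prop_2} takes $\|U_x^{i+1}-U_x^i\|\to 0$ as a hypothesis (verified only empirically in the experiments), and the proposition you are proving implicitly inherits that assumption rather than establishing it.

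Apart from this, your limit-passing is sound but should be stated along the convergent subsequence (you need $L^{i}-L^{i-1}\to 0$ and $W^{i}-W^{i-1}\to 0$ along that subsequence for $\Lambda_L^{i-1}$ and $\Lambda_W^{i-1}$ to converge to the claimed limits), and these primal-increment statements face the same difficulty as the dual one unless you either assume them or prove a genuine sufficient-decrease inequality whose validity survives the dual ascent. If you keep your structure, the honest formulation is to add the hypothesis $\|U_x^{i+1}-U_x^i\|\to 0$ (and vanishing primal increments, or argue them from the proximal updates once the dual sequence is assumed convergent), which is exactly the regime the paper works in.
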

\begin{proof}
 $\left\{\left(L^i,W^i,U_x^i\right)\right\}_0^\infty$ is bounded, hence from Proposition \ref{Prop_2} every limit point $\left(L^\infty,W^\infty\right)$ satisfies $(1-W^\infty)\circ(L^\infty-X) = 0$. Additionally, by Proposition \ref{Prop_3} and the definition of the general sub-gradient \cite{rockafellar2009variational}, we get
 $f(L)\geq f(L^i) + \ip{L-L^i}{-\frac{\rho_x}{\tau_L} \left(L^{i}-L^{i-1}\right) - \rho_x \Lambda_L^{i-1}}
 ~~\forall L$. Letting $i\rightarrow \infty$ and using the limit point feasibility we get $f(L)\geq f(L^{\infty})+\ip{L-L^\infty}{-(1-W^\infty)\circ U_x^\infty}~~\forall L$ resulting in $-(1-W^\infty)\circ U_x^\infty\in \partial f(L^\infty)$. Similarly, we can show that $-(L^\infty-X)\circ U_x^\infty \in \partial g(W^\infty)$, hence concluding the KKT condition for the limit points.
\end{proof}

\section{Extension of M-RPCA }
In this section, we further extend the proposed formulation to handle more challenging scenarios such as cases where changes in the background are present. Our goal is to consider more realistic and challenging scenarios, that dynamic background is present and extend the framework to tackle such cases. Additionally, we enforce spacial and temporal connectivity of the foreground mask.

Let us consider a case where dynamic changes are present as part of the background (such as the motion of leaves in the wind). In the original RPCA such changes would be considered as dynamic perturbations to the static background and will be separated to the sparse component which in turn would result in the noise appearing on the mask after thresholding the sparse component.

The formulation in \eqref{Eq_Main_Formulation} is prone to a  similar problem as the RPCA. Let us consider the presence of dynamic changes in the background and the equality constraint in \eqref{Eq_Main_Formulation}. Depending on the value of $\lambda_w$ in \eqref{Eq_Main_Formulation}, the contribution of these small changes is either considered as part of the foreground mask or will stay present on the estimated background. To further explain, consider large value of $\lambda_w$ and severe random noise as the perturbation. In such case, the cost of adding extra pixels to the mask is high and since the equality constraint has to be satisfied, it would be plausible to accept the noise term on the $L$ variable (which results in a slightly higher value for the nuclear norm) to satisfy the constraint. On the other hand, for small values of $\lambda_w$ the noise will be picked by the foreground mask (satisfying the constraint) which is not appealing. 

In order to extend the formulation such that it can handle dynamic background and is robust to noise, we can slightly change the equality constraint and use more regularization terms. Our prior knowledge about the foreground mask is that it contains more or less connected components as opposed to the dynamic background which has a more random and sparse nature. As a result, we can change the equality constraint to be $(1-W)\circ(X-L) = E$ where $E$ is assumed to be the sparse perturbation. In this case, $L$ does not have to match $X$ where ever $W$ is nonzero and the difference will be considered as part of $E$. We further need to enforce our prior knowledge (sparsity of $E$ and spacial connectivity of $W$) as regularization terms in the objective function. Let us first define some notation.
\begin{definition}
	For a 3D matrix $T\in\realR^{m\times n\times k}$ let $$\left\||DT|\right\|_1 \triangleq \left\| ((D_hT)_{ijk}^2 + (D_v T)_{ijk}^2 + (D_d T)_{ijk}^2)^{1/2} \right\|_1$$ with $D_h,D_v,D_d$ being the horizontal, vertical, and depth derivative operators, respectively, then $\left\||DT|\right\|_1$ denotes the total variation (TV-norm) of the matrix $T$ \cite{norm3Dl1l2}.
\end{definition}
\begin{definition}
	For a 3D matrix $T\in\realR^{m\times n\times k}$ let $\mathcal{R}(T) \triangleq [\text{vec}(T_{\cdot\cdot1}),\cdots,\text{vec}(T_{\cdot\cdot k})]$ denote the reshape operator from a 3D matrix to a 2D matrix by stacking the frames as columns and the $\mathcal{R}^{-1}(\cdot)$ as the inverse operator. 
\end{definition}
In order to enforce spatial connectivity of the foreground mask, we can regularize the total variation (TV-norm) of estimated foreground mask. Noting that the TV-norm only regularizes the changes, we would also like to regularize the total energy of the mask such that ideally the estimated mask would be piece-wise constant with most values set to zero. In order to enforce that, we add the sparsity of the noise contribution through  $\ell_1$-norm. As a result, the problem can be formulated as in \eqref{Eq_preExtended}.
\begin{equation*}
\begin{aligned}
& \underset{L,W,E}{\text{minimize}}
&& \|L\|_*+ \underbrace{\lambda_e \left\|E\right\|_1}_{\text{sparse noise}} \\
&&&+ \underbrace{\lambda_w \|W\|^2 + \lambda_z \left\|\left|D\mathcal{R}^{-1}(W)\right|\right\|_1}_{\text{enforce connectivity of the foreground}} \\
&\text{subject to:}
&& (1-W) \circ \left(X-L\right) = E,~~ W \in \left[0,1\right]^{mn\times k}
\end{aligned}
\label{Eq_preExtended}
\end{equation*}
Adding another extra variable $Z$ to this formulation makes the solution via ADMM with linearizing techniques possible. 
\begin{equation}
\begin{aligned}
& \underset{L,W,Z,E}{\text{minimize}}
& & \|L\|_*+ \lambda_w \|W\|^2 + \lambda_z \left\||Z|\right\|_1 + \lambda_e \left\|E\right\|_1 \\
& \text{subject to:}
& & (1-W) \circ \left(X-L\right) = E\\
& & & D_{\text{3D}}\left(W\right) = Z, \quad W \in \left[0,1\right]^{mn\times k}
\end{aligned}
\label{Eq_Extended_Formulation}
\end{equation}
where $D_{3D}(\cdot) = D\mathcal{R}^{-1}(\cdot)$.

In the following part we derive the algorithm for solving \eqref{Eq_Extended_Formulation} using the ADMM approach with linearizing with respect to the coupled variables. First, we form the augmented Lagrangian by introducing the dual variables $U_x$ and $U_z$ for the equality constraints as in \eqref{Eq_Lagrangian_Extended}. Then, we minimize the augmented Lagrangian w.r.t. each variable while keeping the others fixed and linearizing the quadratic term with the coupled variables. 
\begin{align}
&\mathcal{L}(L,W,Z,E,U_x,U_z) =  \|L\|_* + \lambda_z \left\||Z|\right\|_1 + \lambda_e \left\|E\right\|_1\notag\\ &+ \lambda_w \|W\|^2 + ~\iota_{\left[0,1\right]}(W) + \ip{U_z}{Z-D_{\text{3D}}(W)} \notag\\&+ \frac{\rho_z}{2}\left\|Z-D_{\text{3D}}(W)\right\|^2+ \ip{U_x}{(1-W)\circ(L-X)+E}\notag\\&+ \frac{\rho_x}{2}\left\|(1-W)\circ(L-X)+E\right\|^2
\label{Eq_Lagrangian_Extended}
\end{align}
\begin{definition}
	for simplicity of the notation let us define 
	\begin{small}
		$\Psi^{i}_L\triangleq\left(1-W^i\right)\circ \left((L^i-X)\circ(1-W^i)+E^i+\frac{U_x^i}{\rho_x}\right)$ and 
		$\Psi^{i}_W \triangleq  \left(X-L^{i+1}\right)\circ\left[(L^{i+1}-X)\circ(1-W^i)+E^i+\frac{U_x^i}{\rho_x}\right]$ and $\widehat{\Psi}_W^i \triangleq W^i - \tau_W \Psi_W^i$.
	\end{small}
\end{definition}
The update rule for $L$ has the same form as the previous section so it can be linearized by following similar steps and we get,
\begin{equation}
\begin{split}
L^{i+1} &= \arg\min_{L} \|L\|_*+ \frac{\rho_x}{2\tau_L} \left\|L - \left(L^i - \tau_L  \Psi_L^{i}\right)\right\|^2\\ &= \mathcal{D}\left(L^i - \tau_L  \Psi_L^{i}, \frac{\tau_L}{\rho_x}\right)
\end{split}
\label{Eq_updateL_P2}
\end{equation}

The update rule for $W$ can be written as
\begin{equation}
\begin{split}
&W^{i+1} = \arg\min_{W} \lambda_w \|W\|^2 + ~\iota_{\left[0,1\right]}(W)\\&+ \frac{\rho_z}{2}\left\|Z^i-D_{\text{3D}}(W)+\frac{U_z^i}{\rho_z}\right\|^2\\&+ \frac{\rho_x}{2}\left\|(1-W)\circ(L^{i+1}-X)+E^i + \frac{U_x^i}{\rho_x}\right\|^2
\end{split}
\label{Eq_updateWnotLinear}
\end{equation}
and by linearizing the last term in the right hand side of \eqref{Eq_updateWnotLinear} we get the following minimization problem.
\begin{align}
&W^{i+1} = \arg\min_{W} \lambda_w \|W\|^2 + ~\iota_{\left[0,1\right]}(W)\\& +\frac{\rho_z}{2}\left\|Z^i-D_{\text{3D}}(W)+\frac{U_z^i}{\rho_z}\right\|^2 + \frac{\rho_x}{2}\left\|W-\Psi^{i}_W \right\|^2\notag
\label{Eq_updateWLinear}
\end{align}
The computationally efficient solution can be achieved by employing the Fourier transform and solving a diagonal system of equations (element-wise devision). 
\begin{equation}
\begin{split}
W^{i+1} = \Pi_{\left[0,1\right]}\left[\mathcal{R}\mathcal{F}^{-1}\frac{\mathcal{F}\left(\mathcal{R}^{-1}\left(\Gamma \right)\right)}{\left(\alpha I + \rho_z \Sigma_D^H\Sigma_D\right)}\right]
\end{split}
\label{Eq_updateW_P2}
\end{equation}

where $\alpha =2\lambda_w+\rho_x/\tau_W$, $\mathcal{F}$ and $\mathcal{F}^{-1}$ are the 3D Fourier transform pair, $\Sigma_D = \mathcal{F}^{-1}D\mathcal{F}$, and $\Gamma = \frac{\rho_x}{\tau_W} \widehat{\Psi}_W^{i} + \rho_z \mathcal{R}\left(D^T(Z+U_z/\rho_z)\right)$.

The update rule for $E$ is achieved by minimizing the augmented Lagrangian w.r.t. $E$ which results in element-wise soft-thresholding as
\begin{align}
\label{Eq_updateS_P2}
E^{i+1} &= \arg\min_{E} \lambda_e \left\|E\right\|_1\\& +\frac{\rho_x}{2}\left\|E+(1-W^{i+1})\circ(L^{i+1}-X) + \frac{U_x^i}{\rho_x}\right\|^2\nonumber\\&=\soft\left((W^{i+1}-1)\circ(L^{i+1}-X) - \frac{U_x^i}{\rho_x},\frac{\lambda_e}{\rho_x}\right)\nonumber
\end{align}
The update rule for $Z$ is achieved by solving the minimization problem as in \eqref{Eq_updateZ_P2} where the proximal operator is denoted by $\shrink(\cdot)$ \cite{norm3Dl1l2}.
\begin{align}
Z^{i+1} &= \arg\min_{Z} \lambda_z \left\||Z|\right\|_1  + \frac{\rho_z}{2}\left\|Z-D_{\text{3D}}(W^{i})+\frac{U_z^i}{\rho_z}\right\|^2\nonumber\\&=\shrink\left(D_{\text{3D}}(W^{i+1})-\frac{U_z^i}{\rho_z},\frac{\lambda_z}{\rho_z}\right)
\label{Eq_updateZ_P2}
\end{align}
The dual variables are updated according to the equality constraints as in \eqref{Eq_updateUx_P2} and \eqref{Eq_updateUz_P2}.
\begin{equation}
U_x^{i+1} = U_x^{i} + \rho_x \left((1-W^{i+1}\circ(L^{i+1}-X)+E^{i+1})\right)
\label{Eq_updateUx_P2}
\end{equation}
\begin{equation}
U_z^{i+1} = U_z^{i} + \rho_z \left(Z^{i+1}-D_{\text{3D}}(W^{i+1})\right)
\label{Eq_updateUz_P2}
\end{equation}
\begin{algorithm2e}[h]
	\SetAlgorithmName{Alg.}{}
	
	\caption{ADMM with linearizing applied to Eq.~\eqref{Eq_Extended_Formulation} }
	 \setstretch{1.1}
	\label{Alg_ADMMforDenoising}	
		\begin{small}
	\textbf{Input:} $X$, $\lambda_w$, $\lambda_z$, $\lambda_s$, $\rho_x$, $\rho_z$\;
	$L\gets \text{median}(X)$\quad $W \gets 0$  \quad $Z \gets 0$ \quad $E \gets 0$ \quad $U_x\gets 0$ \quad $U_z \gets 0$\;
	\While{not converged}{
		\Comment{Singular Value Thresholding  \eqref{Eq_updateL_P2}}
		$\displaystyle L \gets \arg \min_{{\bf A}} \widehat{\mathcal{L}}_L({\bf A}, W,Z,E,U_x,U_z)$ 
		
		\Comment{FFT \& diag solve \& iFFT \eqref{Eq_updateW_P2}}
		$\displaystyle W \gets \arg \min_{{\bf A}} \widehat{\mathcal{L}}_W (L, {\bf A},Z,E,U_x,U_z)$
		
		\Comment{Element-wise clipping \eqref{Eq_updateZ_P2}}
		$\displaystyle Z \gets \arg \min_{{\bf A}} \mathcal{L} (L,W,\bA,E,U_x,U_z)$
		
		\Comment{Soft-thresholding \eqref{Eq_updateS_P2}}
		$\displaystyle E \gets \arg \min_{{\bf A}} \mathcal{L} (L,W,Z,\bA,U_x,U_z)$
		
		\Comment{Element-wise mult and add \eqref{Eq_updateUx_P2}}
		$U_x \gets U_x + \rho_x \left((1-W) \circ \left(L-X\right)+E\right)$\;
		
		\Comment{Element-wise add and subtract \eqref{Eq_updateUz_P2}}
		$U_z \gets U_z + \rho_z \left(Z-D_{\text{3D}}(W)\right)$\;
	}
	\textbf{Output:} $L,~W,~E$\;
		\end{small}
\end{algorithm2e}

\subsection{Computational Complexity}
In this part, we briefly discuss the computational complexity of the proposed algorithms. The most computationally demanding step of the Algorithm \ref{Alg_ADMMforMRPCA} is the singular value thresholding step which is the same as the original RPCA via principal component pursuit. In recent years, variety of different methods have been developed to reduce the computation cost of the  PCP algorithm. For instance, the computation of the SVD can be reduced using the power method \cite{pope2011real}. We would like to note that all these methods are also applicable to the proposed algorithms in this study. For Algorithm \ref{Alg_ADMMforDenoising}, in addition to the singular value thresholding computation of the Fourier transform is also required which can be done in $\mathcal{O}(mn\log(mn))$ time using the Fast Fourier Transform. 

\section{Experimental Results and Discussion}
In this section, we present several experiments and comparisons on real-world video sequences with a variety of scenarios. For the experiments with static background we use the Baseline category form the Change Detection (CDnet) dataset \cite{ChangeDetection} and for the dynamic background case we use the I2R dataset \cite{I2Rdataset}. In our evaluations we shall use both statistical measures and visual comparisons. Considering the number of true positive (tp), true negative (tn), false positive (fp), and false negative (fn), the recall (Re = tp/(tp+fn)), Precision (Pre= tp/(tp+fp)), and F-measure (F1 = $2 \ \frac{\text{Pre x Re}}{\text{Pre+Re}}$) are employed as quantitative metrics  to evaluate the performance of the foreground detection. 

\subsection{Results for Static Background }
In our first experiment we consider the video sequences with static background and investigate different settings for the M-RPCA as in \eqref{Eq_Main_Formulation}, EM-RPCA as in \eqref{Eq_Extended_Formulation}, and RPCA. Here, we use 100 frames from different challenging parts of the Baseline videos in CDnet. 
The qualitative and quantitative results are presented in \Fig \ref{Fig_CompMvsRPCA} and Table 1 respectively. 
As we can observe, increase in $\lambda_w$ for M-RPCA results in increase in precision and decrease in recall (similar trade-off is present for RPCA w.r.t. the threshold value). M-RPCA shows improvement over RPCA in general while keeping the recall and precision at relatively high levels.

\begin{figure}[h]
	\begin{center}
		\centerline{\includegraphics[width=\linewidth]{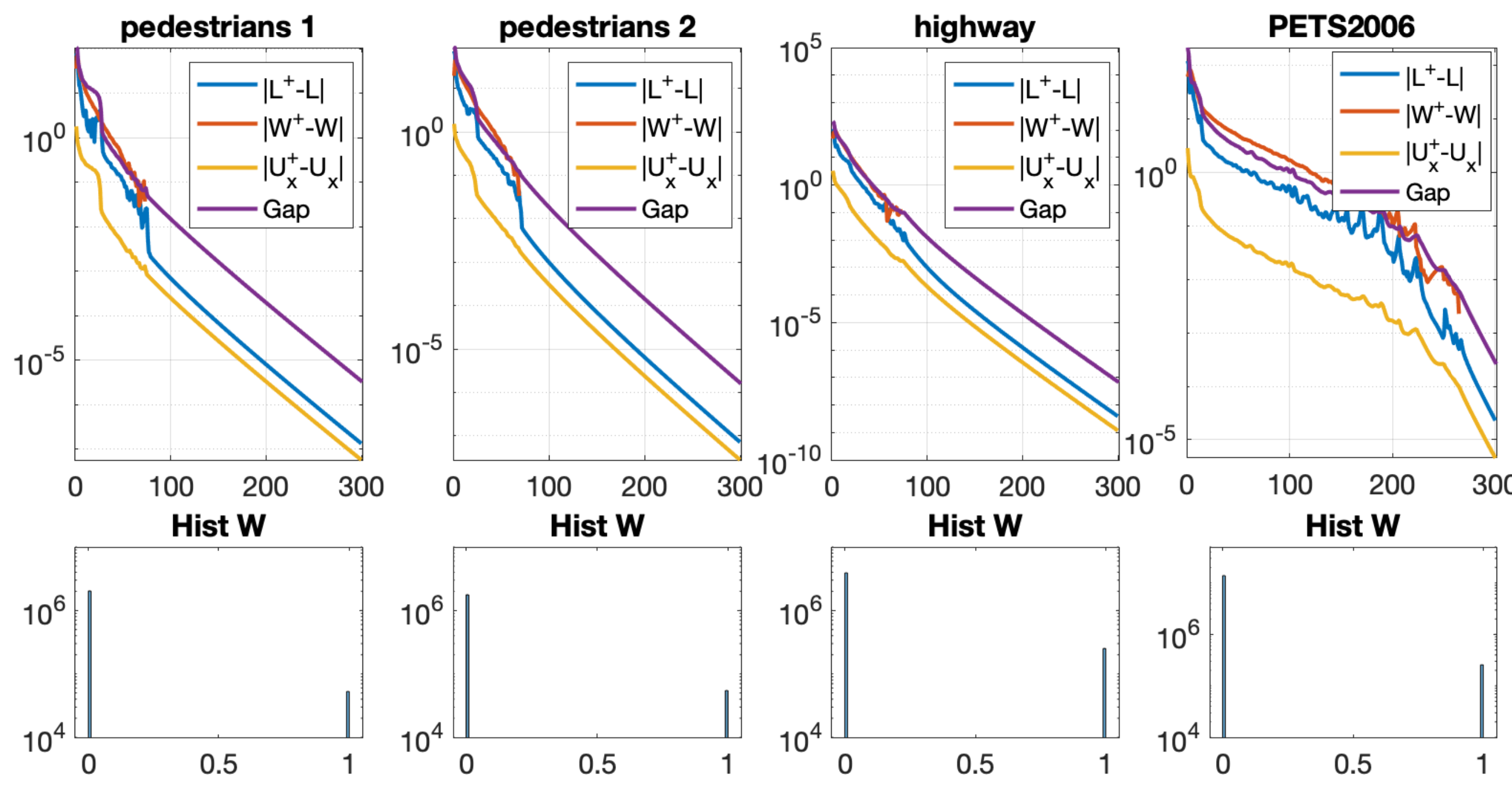}}
		\caption{Empirical convergence results of the Algorithm \ref{Alg_ADMMforMRPCA} for four different datasets.}
		\label{Fig_ConvergeRPCA}
	\end{center}
	\vskip -0.5in
\end{figure}

As we can see from the results of EM-RPCA in Table 1, considering the spacial connectivity increases the performance in the presence of multiple and overlapping objects. 
For the PETS2006, since the man in the back is fairly stationary in the entire  sequence, enforcing the connectivity results in more zeros in the mask of that region. 

The convergence properties of the Algorithm \ref{Alg_ADMMforMRPCA} for different datasets is shown in \Fig \ref{Fig_ConvergeRPCA}. We can observe that the variables have converged and the $\|(1-W)\circ(X-L)\|$ (denoted as Gap) converges to zero which shows the satisfaction of the assumptions in convergence analysis as well as feasibility of the final result. The histogram of the resulting $W$ for M-RPCA, shown in the second row, illustrates the fact that in all the cases the binary mask is directly recovered. 

\begin{figure*}[]
	\begin{center}
		\centerline{\includegraphics[width=\linewidth]{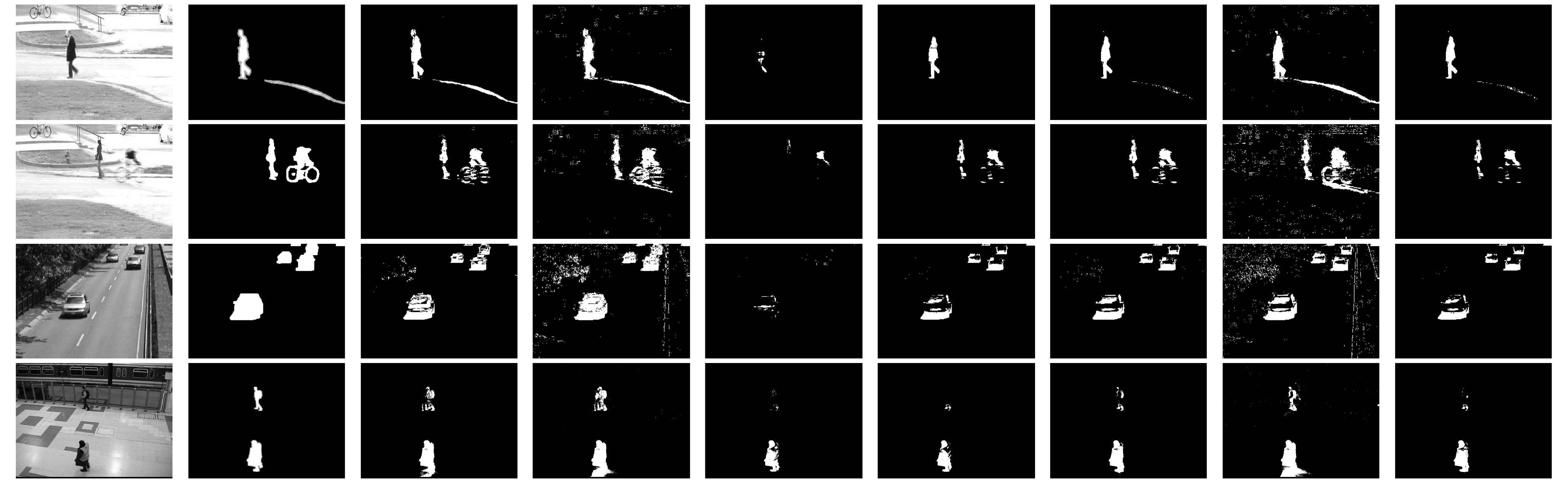}}
		\caption{Effect of parameters. columns l to r: original frame, ground truth, moderate, high, and low values of $\lambda_w$ in M-RPCA, low to high threshold for RPCA, and RPCA thresholded with Otsu method. Rows t to b: Pedestrian1, Pedestrian2, Highway, and PETS2006 }
		\label{Fig_CompMvsRPCA}
	\end{center}
	\vskip -0.2in
\end{figure*}

\begin{table*}[]
	\label{TB_compMvsRPCA}
\caption{Quantitative comparison of M-RPCA \eqref{Eq_Main_Formulation}, EM-RPCA \eqref{Eq_Extended_Formulation}, and RPCA. for static background. The parameter for M-RPCA $\{\lambda_w\}$ , RPCA (threshold value to obtain the mask), and EM-RPCA $\{\lambda_w,~\lambda_z,~\lambda_e\}$ are indicated in the "PARAM" columns}
	\begin{center}
		\begin{scriptsize}
			\begin{sc}
\begin{tabular}{ccccccccccccccccc}
	\hline
	& \multicolumn{4}{c}{pedestrians 1} & \multicolumn{4}{c}{pedestrians 2} & \multicolumn{4}{c}{highway} & \multicolumn{4}{c}{PETS2006} \\ \hline
	& param & Re & Pre & F1 & param & Re & Pre & F1 & param & Re & Pre & F1 & param & Re & Pre & F1 \\ \hline
	\multirow{3}{*}{M-RPCA} & 1e-3 & 0.97 & 0.97 & \textbf{0.97} & 7e-4 & 0.85 & 0.83 & 0.84 & 9e-4 & 0.7 & 0.94 & 0.80 & 5e-5 & 0.76 & 0.81 & \textbf{0.79} \\
	& 1e-4 & 0.99 & 0.67 & 0.80 & 7e-5 & 0.93 & 0.46 & 0.62 & 9e-5 & 0.92 & 0.68 & 0.78 & 5e-6 & 0.84 & 0.70 & 0.76 \\
	& 1e-2 & 0.4 & 1.00 & 0.49 & 7e-3 & 0.17 & 1.00 & 0.31 & 9e-3 & 0.03 & 0.97 & 0.05 & 5e-4 & 0.60 & 0.97 & 0.74 \\ \hline
	\multicolumn{1}{l}{\multirow{3}{*}{EM-RPCA}} & 1e-5 & \multirow{3}{*}{0.94} & \multirow{3}{*}{0.96} & \multirow{3}{*}{0.95} & 1e-5 & \multicolumn{1}{l}{\multirow{3}{*}{0.88}} & \multicolumn{1}{l}{\multirow{3}{*}{0.87}} & \multicolumn{1}{l}{\multirow{3}{*}{\textbf{0.88}}} & 1e-5 & \multicolumn{1}{l}{\multirow{3}{*}{0.94}} & \multicolumn{1}{l}{\multirow{3}{*}{0.89}} & \multicolumn{1}{l}{\multirow{3}{*}{\textbf{0.92}}} & 1e-5 & \multicolumn{1}{l}{\multirow{3}{*}{0.64}} & \multicolumn{1}{l}{\multirow{3}{*}{0.79}} & \multicolumn{1}{l}{\multirow{3}{*}{0.71}} \\
	\multicolumn{1}{l}{} & 1e-5 &  &  &  & 1e-5 & \multicolumn{1}{l}{} & \multicolumn{1}{l}{} & \multicolumn{1}{l}{} & 1e-5 & \multicolumn{1}{l}{} & \multicolumn{1}{l}{} & \multicolumn{1}{l}{} & 1e-5 & \multicolumn{1}{l}{} & \multicolumn{1}{l}{} & \multicolumn{1}{l}{} \\
	\multicolumn{1}{l}{} & 5e-3 &  &  &  & 5e-3 & \multicolumn{1}{l}{} & \multicolumn{1}{l}{} & \multicolumn{1}{l}{} & 5e-3 & \multicolumn{1}{l}{} & \multicolumn{1}{l}{} & \multicolumn{1}{l}{} & 5e-3 & \multicolumn{1}{l}{} & \multicolumn{1}{l}{} & \multicolumn{1}{l}{} \\ \hline 
	\multirow{4}{*}{RPCA} & 0.45 & 0.84 & 1.00 & 0.91 & 0.45 & 0.59 & 0.99 & 0.74 & 0.35 & 0.51 & 0.98 & 0.67 & 0.35 & 0.54 & 0.98 & 0.70 \\
	& 0.5 & 0.88 & 1.00 & 0.93 & 0.5 & 0.67 & 0.95 & 0.80 & 0.4 & 0.59 & 0.95 & 0.73 & 0.45 & 0.65 & 0.93 & 0.76 \\
	& 0.55 & 0.97 & 0.69 & 0.81 & 0.55 & 0.80 & 0.32 & 0.46 & 0.45 & 0.66 & 0.63 & 0.64 & 0.55 & 0.80 & 0.55 & 0.65 \\
	& Otsu & 0.87 & 1.00 & 0.93 & Otsu & 0.58 & 0.99 & 0.73 & Otsu & 0.49 & 0.98 & 0.66 & Otsu & 0.60 & 0.95 & 0.74 \\ \hline
\end{tabular}
\end{sc}
\end{scriptsize}
\end{center}
\vskip -0.1in
\end{table*}

\begin{table*}[]
\caption{Comparison of M-RPCA with other methods over challenging videos from CDnet. }
\label{TB_Compdifferentmeth}
\begin{center}
\begin{small}
\begin{sc}
\begin{tabular}{cccccccccc}
	\hline
	\multicolumn{1}{l}{}       & \multicolumn{3}{c}{Shade}                     & \multicolumn{3}{c}{Office}                    & \multicolumn{3}{c}{Winter}                    \\ \hline
	\multicolumn{1}{l}{}       & Re            & Pre           & F1            & Re            & Pre           & F1            & Re            & Pre           & F1            \\ \hline
	\multicolumn{1}{l}{\quad \quad \quad \quad \quad \quad \quad \textbf{M-RPCA}}    & \textbf{0.77} & \textbf{0.80} & \textbf{0.79} & 0.68          & \textbf{0.83} & \textbf{0.74} & 0.59          & 0.43          & 0.50          \\
	\textbf{Decolor} \cite{DECOLOR}                    & 0.73          & 0.32          & 0.42          & \textbf{0.87} & 0.61          & 0.71          & \textbf{0.64} & \textbf{0.70} & \textbf{0.69} \\
	\textbf{GMM} \cite{GMM}                       & 0.75          & 0.71          & 0.72          & 0.53          & 0.82          & 0.59          & 0.39          & 0.58          & 0.45          \\
\textbf{FBM} \cite{FBM}                        & 0.65          & 0.77          & 0.70          & 0.62          & 0.76          & 0.71          & 0.37          & 0.36          & 0.34          \\
	\textbf{RPCA} \cite{OriginalRPCA}                       & 0.69          & 0.74          & 0.71          & 0.57          & 0.76          & 0.62          & 0.55          & 0.40          & 0.42          \\
	\textbf{ViBe} \cite{ViBe}                      & 0.74          & 0.78          & 0.76          & 0.7           & 0.8           & 0.69          & 0.57          & 0.18          & 0.23          \\
	\textbf{SOBS} \cite{SOBs}                      & 0.63          & 0.78          & 0.68          & 0.67          & 0.79          & 0.69          & 0.18          & 0.53          & 0.24          \\ \hline
\end{tabular}
\end{sc}
\end{small}
\end{center}
\end{table*}

\begin{figure*}[]
	\begin{center}
		\centerline{\includegraphics[width=\linewidth]{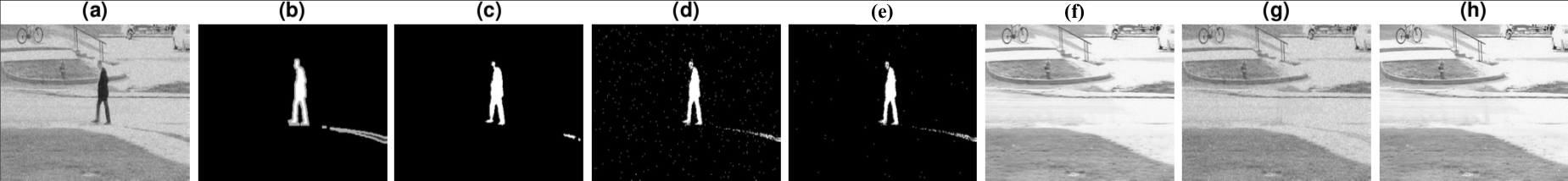}}
		\caption{Visual results for synthetic noise. (a) noisy frame, (b) Ground truth, (c,f) EM-RPCA, (d,g) M-RPCA, and (e,h) RPCA.  }
		\label{Fig_DenoisingvsRPCA} 
	\end{center}
	\vskip -0.2in
\end{figure*}

\begin{figure*}[]
	\begin{center}
		\centerline{\includegraphics[width=\linewidth]{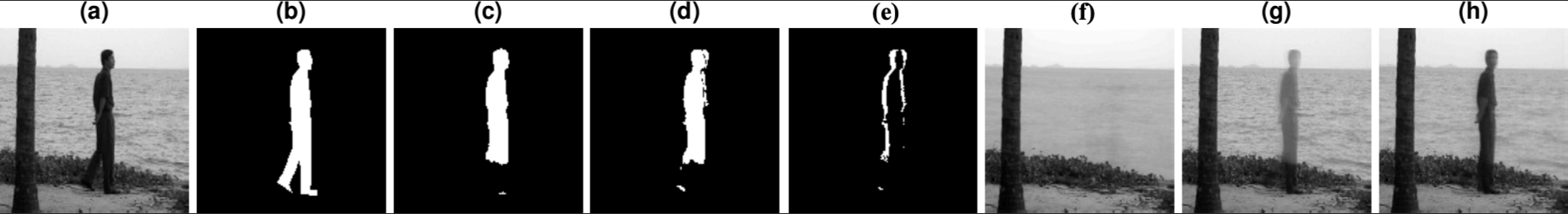}}
		\caption{Visual results for WaterSurface: (a) original frame (b) Ground truth, mask from (c) EM-RPCA  (d) TVRPCA (e) RPCA, low-rank image from (f) EM-RPCA  (g) TVRPCA (h) RPCA. }
		\label{Fig_Dynamicwatersurf}
	\end{center}
	\vskip -0.2in
\end{figure*}

We additionally compare the result of M-RPCA to other methods in the literature over other challenging sequences from CDnet. The experiments in this case closely follow the experimental setup in \cite{ExperimetsforMRPCA}. 
As it can be seen from  Table 2, the M-RPCA outperforms the other RPCA based methods. In the Winter sequence, due to the artifact present, M-RPCA is outperformed by DECOLOR \cite{DECOLOR}.

\subsection{Results for the Extended M-RPCA and Dynamic Background}
In this section we will show experimental results using the EM-RPCA as in \eqref{Eq_Extended_Formulation} with dynamic background videos. First, we show the robustness of EM-RPCA to random noise, by adding synthetic noise to stationary background sequence and evaluate the performance of different methods. 
The results of our model compared with RPCA are shown in  Table \ref{EM-RPCAvsRPCA artificial noise} and \Fig \ref{Fig_DenoisingvsRPCA}. As it can be seen EM-RPCA achieves good performance in terms of peak signal-to-noise ratio (PSNR) of the recovered background and the F-measure of the recovered mask.
\begin{table}[]
	\caption{Comparison of different methods with synthetic noise }
	\label{EM-RPCAvsRPCA artificial noise}
	\vskip 0.15in
	\begin{center}
		\begin{small}
			\begin{sc}
				\begin{tabular}{ccccc}
					\hline
					(SNR = $7.7dB$)	& \multicolumn{2}{c}{Pedestrian} & \multicolumn{2}{c}{Highway}  \\
					& F1            & PSNR           & F1           & PSNR          \\ \hline
					EM-RPCA & 0.94          & 34.65          & 0.90         & 30.80            \\
					M-RPCA  & 0.84          & 24.43          & 0.69         & 24.32            \\
					RPCA    & 0.90          & 34.10          & 0.60         & 31.14            \\ \hline
				\end{tabular}
			\end{sc}
		\end{small}
	\end{center}
	\vskip -0.1in
\end{table}

In order to evaluate the performance of the EM-RPCA, dynamic background sequences from CDnet and I2R dataset are used. As an example, Figure \ref{Fig_components} shows the low-rank, foreground mask and dynamic background components recovered by EM-RPCA. We compare our results to the RPCA and TVRPCA \cite{TVRPCA}. TVRPCA  generalizes RPCA for the dynamic background cases by decomposing the sparse component from the RPCA model into changing background and specially connected foreground. The EM-RPCA enforces the connectivity of the foreground based on the overlaying model.  Table \ref{EM-RPCAvsRPCAvsTV} shows the F-measures for different video sequences. TVRPCA and EM-RPCA perform similarly in terms of this measure. 
\begin{table}[h]
	\caption{Comparison of EM-RPCA with different methods on dynamic background.}
	\label{EM-RPCAvsRPCAvsTV}
	\begin{center}
		\begin{small}
			\begin{sc}
				\begin{tabular}{cccc}
 & EM-RPCA & TVRPCA & RPCA \\ \hline
WaterSurface & \textbf{0.88} & \textbf{0.88} & 0.41 \\
Fountain & \textbf{0.81} & 0.80 & 0.57 \\
Campus & \textbf{0.77} & \textbf{0.77} & 0.72 \\
Fountain 2 & 0.71 & \textbf{0.72} & 0.43 \\
Overpass & \textbf{0.78} & 0.77 & 0.46 \\ \hline
\end{tabular}
			\end{sc}
		\end{small}
	\end{center}
	\vskip -0.1in
\end{table}

Visual results in \Fig\ref{Fig_Dynamicwatersurf} and \Fig\ref{Fig_OfficeSlow} on the other hand show that overlaying model is better capable of recovering the background in case where the foreground stops moving. The ROC curve and the histogram of the $W$ are shown in \Fig \ref{Fig_DynamicMeasures}. The ROC curve shows slight improvement in terms of area under the curve. Additional visual results for another sequence are shown in Figures \ref{Fig_CDnetDynmasks} and \ref{Fig_CDnetDy}. 
\begin{figure}[]
	\begin{center}
		\centerline{\includegraphics[width=\linewidth]{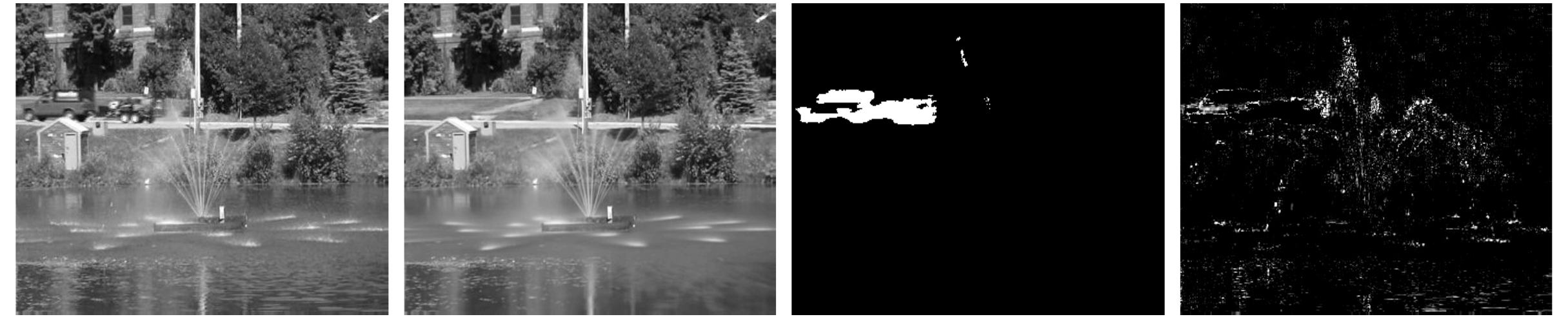}}
		\caption{Different components by EM-RPCA form left to right: Original frame, low-rank component $L$, foreground mask $W$, and dynamic changes of background $E$.}
		\label{Fig_components}
	\end{center}
	\vskip -0.2in
\end{figure}

\begin{figure}[]
	\begin{center}
		\centerline{\includegraphics[width=\linewidth]{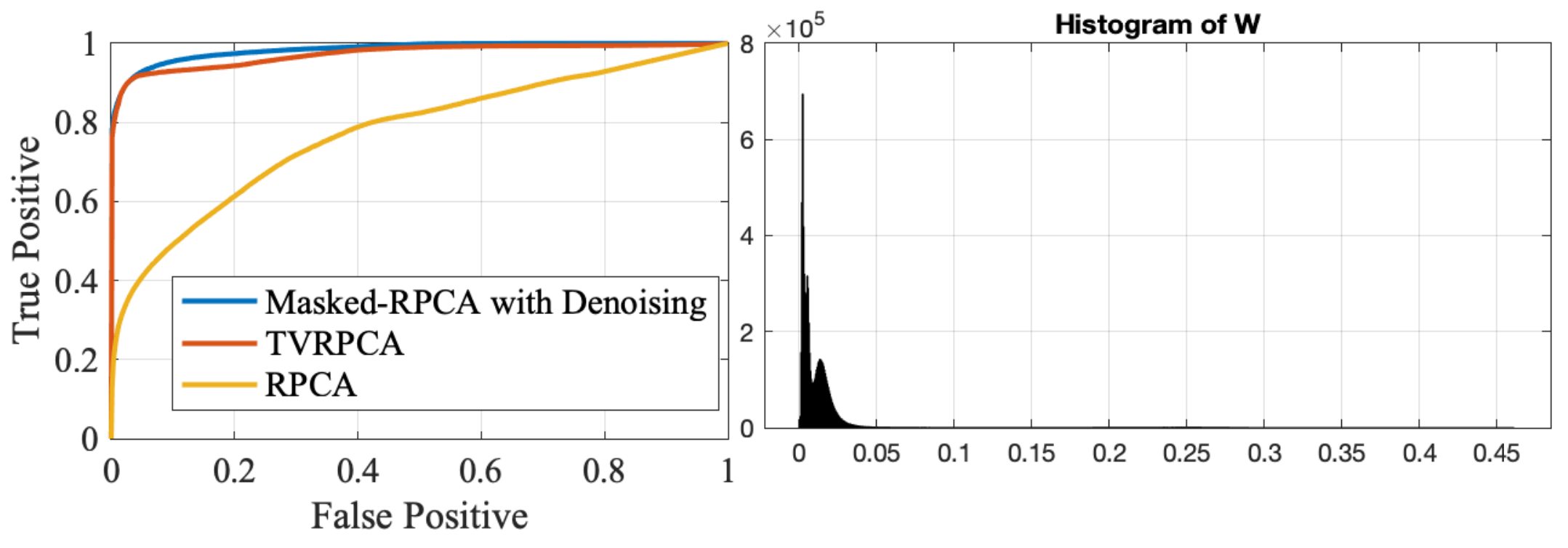}}
		\caption{(left) ROC curve for the result of different methods on WaterSurface data. (right) Histogram of the recovered $W$ from EM-RPCA.}
		\label{Fig_DynamicMeasures}
	\end{center}
	\vskip -0.2in
\end{figure}
\begin{figure}[]
	\begin{center}
		\centerline{\includegraphics[width=\linewidth]{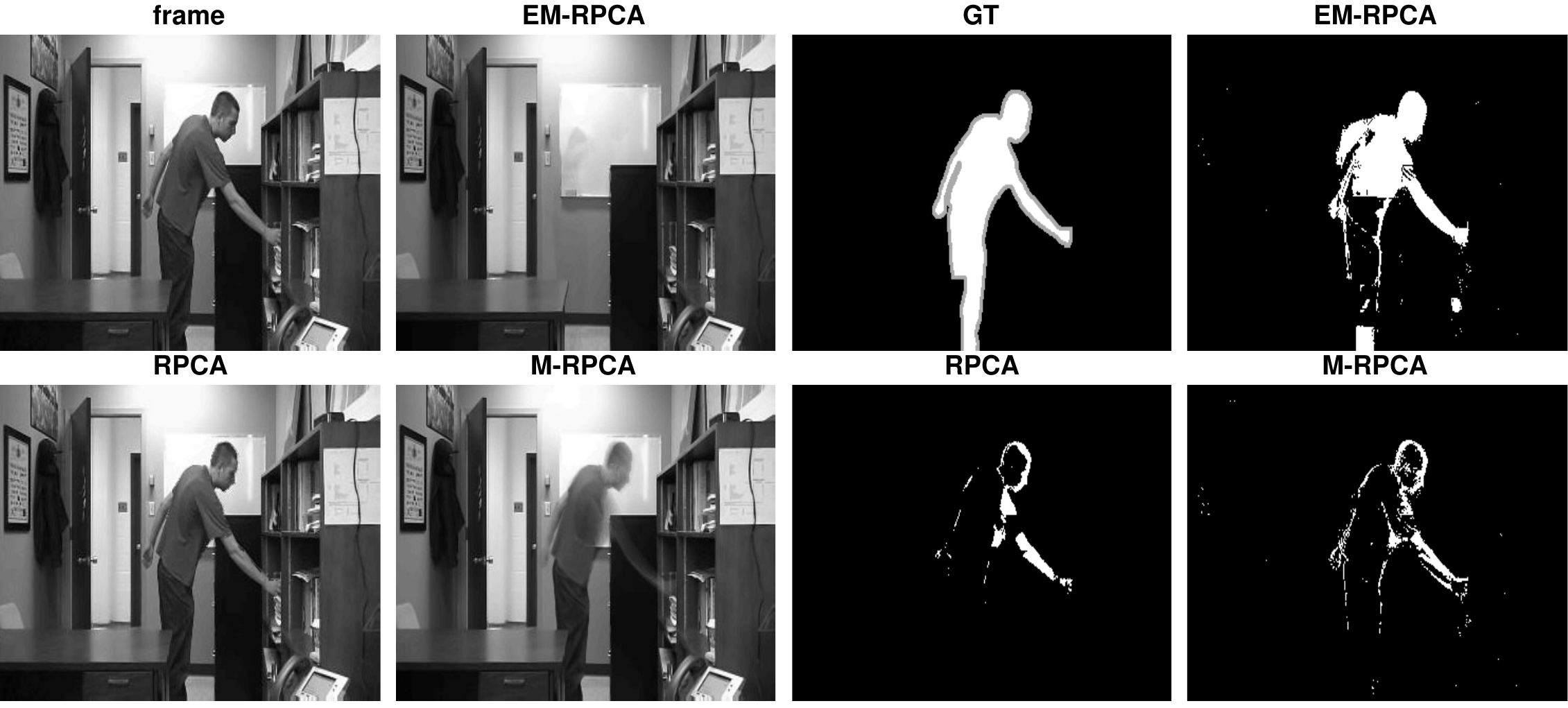}}
		\caption{Visual results for the mask and the low-rank background for the slow moving sequence.  }
		\label{Fig_OfficeSlow}
	\end{center}
	\vskip -0.2in
\end{figure}

\begin{figure}[]
	\begin{center}
		\centerline{\includegraphics[width=\linewidth]{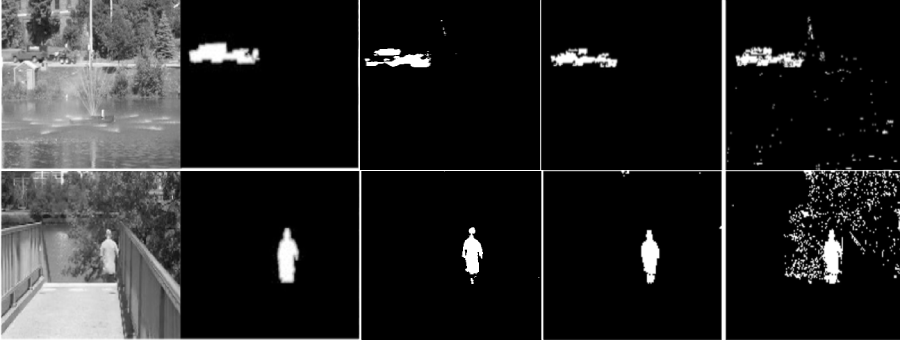}}
		\caption{Visual results for the mask for Fountain2(top row) and Overpass(bottom row) datasets from CDnet. Images form left to right: Original frame, Ground truth mask, EM-RPCA mask, TVRPCA mask, and RPCA mask.}
		\label{Fig_CDnetDynmasks}
	\end{center}
	\vskip -0.2in
\end{figure}
\begin{figure}[]
	\begin{center}
		\centerline{\includegraphics[width=\linewidth]{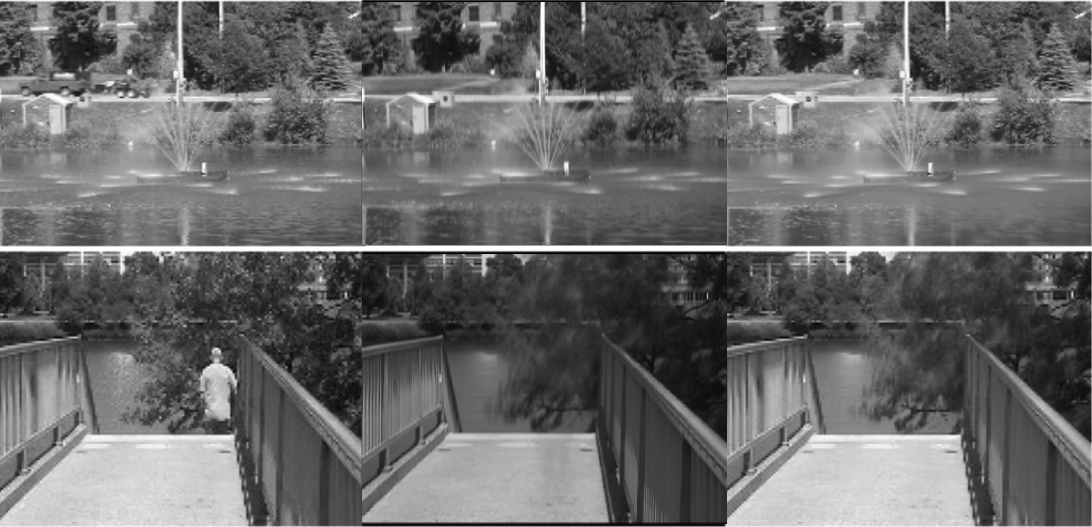}}
		\caption{Visual results for the low-rank image for Fountain2(top row) and Overpass(bottom row) datasets from CDnet. Images form left to right: Original frame, low-rank component for EM-RPCA, and low-rank component for TVRPCA.}
		\label{Fig_CDnetDy}
	\end{center}
	\vskip -0.2in
\end{figure}

\section{Conclusion }
In this study, we introduced an extension of sparse and low-rank decomposition under overlaying model, and developed an optimization framework to solve it.
We also propose an extension of our M-RPCA framework for the dynamic background.
We also provide an analysis of the model convergence under reasonable assumptions.
We performed an extensive experimental studies evaluating our model on multiple videos from CDnet and I2R datasets, and
 show improvements for both  static and dynamic background over RPCA and its extensions.
As  future work, we plan to extend this framework for various  scenarios such as  camera jitter.


%
\clearpage
\begin{small}

\end{small}

\end{document}